\documentclass{article} 
\usepackage{iclr2020_conference,times}


\usepackage{amsmath,amsfonts,bm}









\def\eqref#1{equation~\ref{#1}}









\def\1{\bm{1}}










\DeclareMathAlphabet{\mathsfit}{\encodingdefault}{\sfdefault}{m}{sl}
\SetMathAlphabet{\mathsfit}{bold}{\encodingdefault}{\sfdefault}{bx}{n}



\def\sD{{\mathbb{D}}}

\def\sY{{\mathbb{Y}}}








\newcommand{\E}{\mathbb{E}}



\usepackage{hyperref}
\usepackage{url}

\usepackage[utf8]{inputenc} 
\usepackage[T1]{fontenc}    
\usepackage{hyperref}       
\usepackage{url}            
\usepackage{booktabs}       
\usepackage{amsfonts}       
\usepackage{nicefrac}       
\usepackage{microtype}      

\usepackage{amsmath,amssymb,amsthm,bm}
\usepackage{graphicx}
\usepackage{enumitem}
\usepackage{tabularx}
\usepackage{multirow}

\hypersetup{colorlinks=true,linkcolor=blue,citecolor=blue}

\def\sY{{\mathcal{Y}}}
\def\sD{{\mathcal{D}}}
\def\IR{{\mathbb{R}}}
\def\E{{\mathbb{E}}}

\newtheoremstyle{mytheoremstyle}
{3.25ex \@plus 1ex \@minus .2ex}   
{}   
{\itshape}  
{0pt}       
{\bfseries} 
{.}         
{5pt plus 1pt minus 1pt} 
{}          
\theoremstyle{mytheoremstyle}
\newtheorem{theorem}{Theorem}

\makeatletter
\newcommand{\subalign}[1]{%
  \vcenter{%
    \Let@ \restore@math@cr \default@tag
    \baselineskip\fontdimen10 \scriptfont\tw@
    \advance\baselineskip\fontdimen12 \scriptfont\tw@
    \lineskip\thr@@\fontdimen8 \scriptfont\thr@@
    \lineskiplimit\lineskip
    \ialign{\hfil$\m@th\scriptstyle##$&$\m@th\scriptstyle{}##$\hfil\crcr
      #1\crcr
    }%
  }%
}
\makeatother

\title{Extreme Classification via \\
Adversarial Softmax Approximation}


\author{Robert Bamler \& Stephan Mandt \\
Department of Computer Science \\
University of California, Irvine\\
\texttt{\{rbamler,mandt\}@uci.edu}
}

%

\iclrfinalcopy 

\begin{document}

\maketitle


\begin{abstract}  
Training a classifier over a large number of classes, known as 'extreme classification', 
has become a topic of major interest with applications
in technology, science, and e-commerce. Traditional softmax regression induces a gradient cost
proportional to the number of classes~$C$, which often is prohibitively expensive.
A popular scalable softmax approximation relies on uniform negative
sampling, which suffers from slow convergence due a poor signal-to-noise ratio. 
In this paper, we propose a simple training method for drastically enhancing the gradient signal 
by drawing negative samples from an adversarial model that mimics the data distribution.
Our contributions are three-fold: (i)~an adversarial sampling mechanism that 
produces negative samples at a cost only logarithmic in~$C$, thus still resulting in cheap gradient updates;
(ii)~a mathematical proof that this adversarial sampling minimizes the gradient
variance while any bias due to non-uniform sampling can be removed; 
(iii)~experimental results on large scale data sets that
show a reduction of the training time by an order of magnitude relative to several competitive baselines.
\end{abstract}

\section{Introduction}
\label{sec:intro}

In many problems in science, healthcare, or e-commerce, one is interested in training classifiers
over an enormous number of classes: a problem known as `extreme classification'~\citep{agrawal2013multi,jain2016extreme,prabhu2014fastxml,siblini2018craftml}.
For softmax (aka multinomial) regression, each gradient step incurs a cost proportional to the number of classes~$C$. As this may be prohibitively expensive
for large~$C$, recent research has explored more scalable softmax approximations which 
circumvent the linear scaling in~$C$. Progress in accelerating the training procedure 
and thereby scaling up extreme classification promises to dramatically improve, e.g., advertising~\citep{prabhu2018parabel}, recommender systems, ranking
algorithms~\citep{bhatia2015sparse,jain2016extreme}, and medical diagnostics~\citep{bengio2019extreme,lippert2017identification,baumel2018multi}

While scalable softmax approximations have been proposed,
each one has its drawbacks. 
The most popular approach due to its simplicity is `negative sampling'~\citep{mnih2009scalable,mikolov2013distributed},
which turns the problem into a binary classification between
so-called `positive samples' from the data set and `negative samples' that are drawn at random
from some (usually uniform) distribution over the class labels.
While negative sampling makes the updates cheaper since computing the gradient no longer scales
with $C$, it induces additional gradient noise that leads to a poor signal-to-noise ratio of the stochastic gradient estimate. 
Improving the signal-to-noise ratio in negative sampling while still enabling cheap gradients would 
dramatically enhance the speed of convergence.

In this paper, we present an algorithm that inherits the cheap gradient updates from negative sampling
while still preserving much of the gradient signal of the original softmax regression problem. 
Our approach rests on the insight that the signal-to-noise ratio in negative sampling is poor
since there is no association between input features and their artificial labels. If negative samples
were harder to discriminate from positive ones, a learning algorithm would obtain a better gradient
signal close to the optimum. 
Here, we make these arguments mathematically rigorous and propose
a non-uniform sampling scheme for scalably approximating a softmax classification scheme. 
Instead
of sampling labels uniformly, our algorithm uses an \emph{adversarial auxiliary model} to draw `fake' labels that 
are more realistic by taking the input features of the data into account. 
We prove that such procedure reduces the gradient noise of the algorithm, and in fact minimizes the gradient
variance in the limit where the auxiliary model optimally mimics the data distribution.

A useful adversarial model should require only little overhead to be fitted to the data, and it needs to be able to \emph{generate} negative samples quickly in order to enable inexpensive gradient updates.
We propose a probabilistic version of a decision tree that has these properties. As a side result of our approach, we show how such an auxiliary model can be constructed and efficiently trained.
Since it is almost hyperparameter-free, it does not cause extra complications when tuning models.

The final problem that we tackle is to remove the bias that the auxiliary model causes
relative to our original softmax classification.  
Negative sampling is typically described as a softmax approximation; however,
only \emph{uniform} negative sampling correctly approximates the softmax. In this paper, we show that 
the bias due to non-uniform negative sampling can be easily removed at test time.

The stucture of our paper reflects our main contributions as follows:

\begin{enumerate}
  \item We present a new scalable softmax approximation (Section~\ref{sec:method}). 
    We show that non-uniform sampling from an auxiliary model 
    can improve the signal-to-noise ratio.
    The best performance is achieved when this sampling mechanism is \emph{adversarial}, i.e., when it generates fake labels that are hard to discriminate from the true ones. 
    To allow for efficient training, such adversarial samples need to be generated at a rate sublinear (e.g., logarithmic) in~$C$.
  \item We design a new, simple adversarial auxiliary model that
    satisfies the above requirements (Section~\ref{sec:aux_model}).
    The model is based on a probabilistic version of a decision tree.
    It can be efficiently pre-trained and included into our approach, and requires only minimal tuning.
  \item We present mathematical proofs that (i)~the best signal-to-noise ratio in the gradient is
    obtained if the auxiliary model best reflects the true dependencies between input features
    and labels, and that (ii)~the involved bias to the softmax approximation can be exactly
    quantified and cheaply removed at test time (Section~\ref{sec:theory}).
  \item We present experiments on two classification data sets that show that our method
      outperforms all baselines by at least one order of magnitude in training speed (Section~\ref{sec:results}). 
\end{enumerate}
We discuss related work in Section~\ref{sec:related} and summarize our approach in Section~\ref{sec:conclusions}.


\section{An Adversarial Softmax Approximation}
\label{sec:method}

We propose an efficient algorithm to train a classifier over a large set of classes, using an asymptotic equivalence between softmax classification and negative sampling (Subsection~\ref{sec:softmax}).
To speed up convergence, we generalize this equivalence to model-based negative sampling in Subsection~\ref{sec:generalized_neg_sampling}.

\subsection{Asymptotic Equivalence of Softmax Classification and Negative Sampling}
\label{sec:softmax}

\paragraph{Softmax Classification (Notation).}
We consider a training data set $\sD=\{(x_i,y_i)\}_{i=1:N}$ of~$N$ data points with $K$-dimensional feature vectors $x_i\in \IR^K$.
Each data point has a single label $y_i\in\sY$ from a discrete label set~$\sY$.
A softmax classifier is defined by a set of functions~$\{\xi_y\}_{y\in\sY}$ that map a feature vector~$x$ and model parameters~$\theta$ to a score $\xi_{y}(x,\theta)\in\IR$ for each label~$y$.
Its loss function is
\begin{align} \label{eq:softmax-loss}
  \ell_\text{softmax}(\theta)
  &=\sum_{(x,y)\in\mathcal D} \bigg[
      - \xi_{y}(x,\theta)
      + \log\bigg(\sum_{y'\in\sY} e^{\xi_{y'}(x,\theta)}\bigg)
    \bigg].
\end{align}
While the first term encourages high scores $\xi_y(x,\theta)$ for the correct labels~$y$, the second
term encourages low scores for all labels $y'\in\sY$, thus preventing degenerate solutions that set all scores to infinity.
Unfortunately, the sum over $y'\in\sY$ makes gradient based minimization of~$\ell_\text{softmax}(\theta)$ expensive if the label set~$\sY$ is large.
Assuming that evaluating a single score~$\xi_{y'}(x,\theta)$ takes $O(K)$ time, each gradient step costs $O(K C)$, where $C=|\sY|$ is the size of the label set.

\paragraph{Negative Sampling.}
Negative sampling turns classification over a large label set~$\sY$ into binary classification between so-called positive and negative samples.
One draws positive samples $(x,y)$ from the training set and constructs negative samples $(x,y')$ by drawing random labels~$y'$ from some noise distribution~$p_\text{n}$.
One then trains a logistic regression by minimizing the stochastic loss function
\begin{align} \label{eq:neg-sampling-loss}
  \ell_\text{neg.sampl.}(\phi)
  &=\sum_{(x,y)\in\sD} \!\!\!\big[
      - \log\sigma(\xi_y(x,\phi))
      - \log\sigma(-\xi_{y'}(x,\phi))
    \,\big]
  \qquad\text{where}\qquad
    y' \sim p_\text{n}
\end{align}
with the sigmoid function $\sigma(z) = 1/(1+e^{-z})$.
Here, we used the same score functions $\xi_y$ as in Eq.~\ref{eq:softmax-loss} but introduced different model parameters~$\phi$ so that we can distinguish the two models.
Gradient steps for $\ell_\text{neg.sampl}(\phi)$ cost only $O(K)$ time as there is no sum over all labels $y'\in\sY$.

\paragraph{Asymptotic Equivalence.}
The models in Eqs.~\ref{eq:softmax-loss} and Eq.~\ref{eq:neg-sampling-loss} are exactly equivalent
in the \emph{non-parametric limit}, i.e., if the function class $x \mapsto \xi_{y}(x,\theta)$
is flexible enough to map $x$ to any possible score.
A further requirement is that~$p_\text{n}$ in Eq.~\ref{eq:neg-sampling-loss} is the uniform distribution
over~$\sY$. If both conditions hold, it follows that if
 $\theta^*$ and~$\phi^*$ minimize Eq.~\ref{eq:softmax-loss} and
Eq.~\ref{eq:neg-sampling-loss}, respectively, they learn identical scores,
\begin{align} \label{eq:equivalence-uniform}
  \xi_y(x,\theta^*) &= \xi_y(x,\phi^*) + \text{const.}
  \qquad\text{(for uniform~$p_\text{n}$).}
\end{align}

As a consequence, one is free to choose the loss function that is easier to minimize.
While gradient steps are cheaper by a factor of $O(C)$ for negative sampling, the
randomly drawn negative samples increase the variance of the stochastic gradient estimator
and worsen the signal-to-noise ratio of the gradient, slowing-down convergence.
The next section combines the strengths of both approaches.

\subsection{Adversarial Negative Sampling}
\label{sec:generalized_neg_sampling}

\paragraph{Overview.} We propose a generalized variant of negative sampling that reduces the gradient noise.
The main idea is to train with negative samples~$y'$ that are hard to distinguish from positive samples.
We draw~$y'$ from a conditional noise distribution~$p_\text{n}(y'|x)$ using an auxiliary model.
This introduces a bias, which we remove at prediction time.
In summary our proposed approach consists of three steps:
\begin{enumerate}
  \item\label{step:aux}
    Parameterize the noise distribution~$p_\text{n}(y'|x)$ by an auxiliary model and fit it to the data set.
  \item\label{step:main}
    Train a classifier via negative sampling (Eq.~\ref{eq:neg-sampling-loss}) using adversarial negative samples from the auxiliary model fitted in Step~1 above.
    For our proposed auxiliary model, drawing a negative sample costs only $O(k \log C)$ time with some $k<K$, i.e., it is sublinear in~$C$.
  \item\label{step:prediction}
    The resulting model has a bias.
    When making predictions, remove the bias by mapping it to an unbiased softmax classifier using the generalized asymptotic equivalence in Eq.~\ref{eq:bias-removal} below.
\end{enumerate}
We elaborate on the above Step~\ref{step:aux} in Section~\ref{sec:aux_model}.
In the present section, we focus instead on Step~\ref{step:main} and its dependency on the choice of noise distribution~$p_\text{n}$, and on
the bias removal (Step~\ref{step:prediction}).

\paragraph{Why Adversarial Noise Improves Learning.}
We first provide some intuition why uniform negative sampling is not optimal, and how sampling
from a non-uniform noise distribution may improve the gradient signal.
We argue that the poor gradient signal is caused by the fact that negative
samples are too easy to distinguish from positive samples. 
A data set with many categories is typically comprised of several hierarchical clusters, 
with large clusters of generic concepts and small sub-clusters of specialized concepts. 
When drawing negative samples uniformly across the data set, the correct label will 
likely belong to a different generic concept than the negative sample. For example, an image classifier
will therefore quickly learn to distinguish, e.g., dogs from bicycles,
but since negative samples from the same cluster are rare, 
it takes much longer to learn the differences between a Boston Terrier and a French Bulldog.
The model quickly learns to assign very low scores $\xi_{y'}(x,\phi) \ll0$ to
such `obviously wrong' labels, making their contribution to the gradient exponentially small,
\begin{equation}
\begin{aligned}
  {||\nabla_{\!\phi} \log \sigma(-\xi_{y'}(x,\phi))||}_2
  &= \sigma(\xi_{y'}(x,\phi)) \,{||\nabla_{\!\phi} \xi_{y'}(x,\phi)||}_2 \\
  &\approx e^{\xi_{y'}(x,\phi)} \,{||\nabla_{\!\phi} \xi_{y'}(x,\phi)||}_2
  \qquad\qquad \text{for $\xi_{y'}(x,\phi)\ll 0$.}
\end{aligned}
\end{equation}
A similar vanishing gradient problem was pointed out for word embeddings by \citet{chen2018improving}.
Here, the vanishing gradient is due to different word frequencies, and a popular solution is therefore to draw negative samples from a nonuniform but unconditional noise distribution $p_\text{n}(y')$ based on the empirical word frequencies~\citep{mikolov2013distributed}.
This introduces a bias which does not matter for word embeddings since the focus is not on classification
but rather on learning useful representations.

Going beyond frequency-adjusted negative sampling, we show that one can drastically improve the procedure
by generating negative samples from an auxiliary model.  
We therefore propose to generate negative samples~$y'\sim p_\text{n}(y'|x)$ conditioned on the input feature~$x$.
This has the advantage that the distribution of negative samples can be made much more similar to the distribution of positive samples, leading to a better signal-to-noise ratio. 
One consequence is that the introduced bias can no longer be ignored, which is what we address next.

\paragraph{Bias Removal.}
Negative sampling with a nonuniform noise distribution introduces a bias.
For a given input feature vector~$x$, labels~$y'$ with a high noise probability $p_\text{n}(y'|x)$ are frequently drawn as negative samples,
causing the model to learn a low score $\xi_{y'}(x,\phi^*)$.
Conversely, a low $p_\text{n}(y'|x)$ leads to an inflated score $\xi_{y'}(x,\phi^*)$.
It turns out that this bias can be easily quantified via a generalization of Eq.~\ref{eq:equivalence-uniform}.
We prove in Theorem~\ref{thm:equivalence} (Section~\ref{sec:theory}) that in the nonparametric limit for arbitrary $p_\text{n}(y'|x)$,
\begin{align} \label{eq:bias-removal}
  \xi_y(x,\theta^*) &= \xi_y(x,\phi^*) + \log p_\text{n}(y|x) + \text{const.}
  \qquad\text{(nonparametric limit and arbitrary~$p_\text{n}$).}
\end{align}
Eq.~\ref{eq:bias-removal} is an asymptotic equivalence between softmax classification (Eq.~\ref{eq:softmax-loss}) and generalized negative sampling (Eq.~\ref{eq:neg-sampling-loss}).
While strict equality holds only in the nonparametric limit, many models are flexible enough that Eq.~\ref{eq:bias-removal} holds approximately in practice.
Eq.~\ref{eq:bias-removal} allows us to make unbiased predictions by mapping biased negative sampling scores $\xi_y(x,\phi^*)$ to unbiased softmax scores $\xi_y(x,\theta^*)$.
There is no need to solve for the corresponding model parameters~$\theta^*$, the scores $\xi_y(x,\theta^*)$ suffice for predictions.

\paragraph{Regularization.}
In practice, softmax classification typically requires a regularizer with some strength $\lambda>0$ to prevent overfitting.
With the asymptotic equivalence in Eq.~\ref{eq:bias-removal}, regularizing the softmax scores $\xi_y(x,\theta)$ is similar to regularizing $\xi_y(x,\phi) + \log p_\text{n}(y|x)$ in the proposed generalized negative sampling method.
We thus propose to use the following regularized variant of Eq.~\ref{eq:neg-sampling-loss},
\begin{align} \label{eq:neg-sampling-loss-regularized}
  \ell_\text{neg.sampl.}^\text{(reg.)}(\phi)
  = \frac{1}{N} \!\!\sum_{(x,y)\in\sD} \!\!\!\Big[\!
    & - \log\sigma(\xi_y(x,\phi))
      + \lambda\big(\xi_y(x,\phi) + \log p_\text{n}(y|x)\big)^2 \\[-12pt]
    & - \log\sigma(-\xi_{y'}(x,\phi))
      + \lambda\big(\xi_{y'}(x,\phi) + \log p_\text{n}(y'|x)\big)^2
    \Big];
  \quad\text{$y' \sim p_\text{n}(y'|x)$.} \nonumber
\end{align}

\paragraph{Comparison to GANs.}
The use of adversarial negative samples, i.e., negative samples that are designed to `confuse' the logistic regression in Eq.~\ref{eq:neg-sampling-loss}, bears some resemblance to generative adversarial networks (GANs) \citep{goodfellow2014generative}.
The crucial difference is that GANs are \emph{generative} models, whereas we train a \emph{discriminative} model over a discrete label space~$\sY$.
The `generator'~$p_\text{n}$ in our setup only needs to find a rough approximation of the (conditional) label distribution because the final predictive scores in Eq.~\ref{eq:bias-removal} combine the `generator scores' $\log p_\text{n}(y|x)$ with the more expressive `discriminator scores' $\xi_y(x,\phi^*)$.
This allows us to use a very restrictive but efficient generator model (see Section~\ref{sec:aux_model} below) that we can keep constant while training the discriminator.
By contrast, the focus in GANs is on finding the best possible generator, which requires concurrent training of a generator and a discriminator via a potentially unstable nested min-max optimization.


\section{Conditional Generation of Adversarial Samples}
\label{sec:aux_model}
Having proposed a general approach for improved negative sampling
with an adversarial auxiliary model $p_n$ (Section~\ref{sec:method}), we 
now describe a simple construction for such a model that satisfies
all requirements. The model
is essentially a probabilistic version of a decision tree which is
able to conditionally generate negative samples by ancestral sampling.
Readers who prefer to proceed can skip this section without loosing the main thread of the paper.

Our auxiliary model has the following properties:
(i)~it can be efficiently fitted to the training data~$\sD$ requiring minimal hyperparameter tuning and subleading computational overhead over the training of the main model;
(ii)~drawing negative samples $y'\sim p_\text{n}(y'|x)$ scales only as $O(\log|\sY|)$, thus improving over the linear scaling of the softmax loss function (Eq.~\ref{eq:softmax-loss}); and
(iii)~the log likelihood $\log p_\text{n}(y|x)$ can be evaluated explicitly so that we can apply the bias removal in Eq.~\ref{eq:bias-removal}.
Satisfying requirements (i) and~(ii) on model efficiency comes at the cost of some model performance.
This is an acceptable trade-off since the performance of~$p_\text{n}$ affects only the quality of negative samples.

\paragraph{Model.}
Our auxiliary model for~$p_\text{n}$ is inspired by the Hierarchical Softmax model due to \citet{morin2005hierarchical}.
It is a balanced probabilistic binary decision tree, where each leaf node is mapped uniquely to a label $y\in\sY$.
A decision tree imposes a hierarchical structure on~$\sY$, which can impede performance if it does not reflect any semantic structure in~$\sY$.
\citet{morin2005hierarchical} rely on an explicitly provided semantic hierarchical structure, or `ontology'.
Since an ontology is often not available, we instead construct a hierarchical structure in a data driven way.
Our method has some similarity to the approach by \citet{mnih2009scalable}, but it is more principled in that we fit both the model parameters and the hierarchical structure by maximizing a single log likelihood function.

To sample from the model, one walks from the tree's root to some leaf.
At each node~$\nu$, one makes a binary decision $\zeta\in\{\pm1\}$ whether to continue to the right child ($\zeta=1$) or to the left child ($\zeta=-1$).
Given a feature vector~$x$, we model the likelihood of these decisions as $\sigma\big(\zeta(w_\nu^\top x + b_\nu)\big)$, where the weight vector~$w_\nu$ and the scalar bias~$b_\nu$ are model parameters associated with node~$\nu$.
Denoting the unique path~$\pi_y$ from the root node~$\nu_0$ to the leaf node associated with label~$y$ as a sequence of nodes and binary decisions, $\pi_y\equiv \big((\nu_0, \zeta_0),(\nu_1,\zeta_1),\ldots\big)$, the log likelihood of the training set~$\sD$ is thus
\begin{align} \label{eq:full-objective-aux}
  \mathcal L
  &:= \!\!\! \sum_{(x,y)\in\sD} \!\!\! \log p_\text{n}(y|x)
  = \!\!\!\sum_{(x,y)\in\sD} \bigg[ \sum_{(\nu,\zeta) \in \pi_y} \!\!\!\! \log \sigma\big(\zeta(w_\nu^\top x + b_\nu)\big) \bigg].
\end{align}

\paragraph{Greedy Model Fitting.}
We maximize the likelihood~$\mathcal L$ in Eq.~\ref{eq:full-objective-aux} over (i) the model parameters $w_\nu$~and~$b_\nu$ of all nodes~$\nu$, and (ii) the hierarchical structure, i.e., the mapping between labels~$y$ and leaf nodes.
The latter involves an exponentially large search space, making exact maximization intractable.
We use a greedy approximation where we
recursively split the label set~$\sY$ into halves and associate each node~$\nu$ with a subset $\sY_\nu \subseteq \sY$.
We start at the root node~$\nu_0$ with $\sY_{\nu_0} = \sY$ and finishing at the leaves with a single label per leaf.
For each node~$\nu$, we maximize the terms in~$\mathcal L$ that depend on $w_\nu$ and~$b_\nu$.
These terms correspond to data points with a label $y\in\sY_\nu$, leading to the objective
\begin{align}\label{eq:greedy-objective-aux}
  \mathcal L_\nu
  :=\!\!\! \sum_{\substack{(x,y)\in\sD \,\wedge\, y\in\sY_{\nu}}}
    \!\!\!\!\! \log \sigma\big(\zeta_y (w_\nu^\top x + b_\nu)\big).
\end{align}
We alternate between a continuous maximization of~$\mathcal L_\nu$ over $w_\nu$ and~$b_\nu$, and a discrete maximization over the binary indicators $\zeta_y\in\{\pm1\}$ that define how we split~$\sY_\nu$ into two equally sized halves.
The continuous optimization is over a convex function and it converges quickly to machine precision with Newton ascent, which is free of hyperparameters like learning rates.
For the discrete optimization, we note that changing~$\xi_y$ for any $y\in\sY_\nu$ from~$-1$ to~$1$ (or from~$1$ to~$-1$) increases (or decreases)~$\mathcal L_\nu$ by
\begin{align}
  \Delta_y
  &:= \sum_{x\in\sD_y} \!\left[ \log \sigma(w_\nu^\top x+b_\nu) - \log \sigma(-w_\nu^\top x - b_\nu) \right]
  = \sum_{x\in\sD_y} \!\left( w_\nu^\top x+b_\nu \right).
\end{align}
Here, the sums over~$\sD_y$ run over all data points in~$\sD$ with label~$y$, and the second equality is an algebraic identity of the sigmoid function.
We maximize~$\mathcal L_\nu$ over all~$\zeta_y$ under the boundary condition that the split be into equally sized halves by setting $\zeta_y \gets 1$ for the half of $y\in\sY_\nu$ with largest~$\Delta_y$ and $\zeta_y \gets -1$ for the other half.
If this changes any~$\zeta_y$ then we switch back to the continuous optimization.
Otherwise, we have reached a local optimum for node~$\nu$, and we proceed to the next node.

\paragraph{Technical Details.}
In the interest of clarity, the above description left out the following details.
Most importantly, to prioritize efficiency over accuracy, we preprocess the feature vectors~$x$ and project them to a smaller space~$\IR^k$ with $k<K$ using principal component analysis (PCA).
Sampling from~$p_\text{n}$ thus costs only $O(k \log|\sY|)$ time.
This dimensionality reduction only affects the quality of negative samples.
The main model (Eq.~\ref{eq:neg-sampling-loss}) still operates on the full feature space~$\IR^K$.
Second, we add a quadratic regularizer~$-\lambda_\text{n}(||w_\nu||^2+b_\nu^2)$ to~$\mathcal L_\nu$, with strength~$\lambda_\text{n}$ set by cross validation.
Third, we introduce uninhabited padding labels if $|\sY|$ is not a power of two.
We ensure that $p_\text{n}(\tilde y|x)=0$ for all padding labels~$\tilde y$ by setting~$b_\nu$ to a very large positive or negative value if either of $\nu$'s children contains only padding labels.
Finally, we initialize the optimization with $b_\nu\gets0$ and by setting $w_\nu\in\IR^k$ to the dominant eigenvector of the covariance matrix of the set of vectors $\{\sum_{x\in\sD_y}x\}_{y\in\sY_\nu}$.


\section{Theoretical Aspects}
\label{sec:theory}

We formalize and prove the two main premises of the algorithm proposed in Section~\ref{sec:generalized_neg_sampling}.
Theorem~\ref{thm:equivalence} below states the equivalence between softmax classification and negative sampling (Eq.~\ref{eq:bias-removal}), and Theorem~\ref{thm:optimum} formalizes the claim that adversarial negative samples maximize the signal-to-noise ratio.

\begin{theorem} \label{thm:equivalence}
  In the nonparametric limit, the optimal model parameters $\theta^*$ and $\phi^*$ that minimize $\ell_\text{softmax}(\theta)$ in Eq.~\ref{eq:softmax-loss} and $\ell_\text{neg.sampl.}(\phi)$ in Eq.~\ref{eq:neg-sampling-loss}, respectively, satisfy Eq.~\ref{eq:bias-removal} for all~$x$ in the data set and all $y\in\sY$.
  Here, the ``const.'' term on the right-hand side of Eq.~\ref{eq:bias-removal} is independent of~$y$.
\end{theorem}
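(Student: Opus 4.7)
The plan is to exploit the nonparametric limit: since the functions $\xi_y(x,\cdot)$ are assumed flexible enough to realize any score assignment, I can treat each value $\xi_y(x,\cdot)$ at a fixed pair $(x,y)$ as an independent free variable and minimize both objectives pointwise in~$x$. Writing $s_y := \xi_y(x,\theta)$ and $t_y := \xi_y(x,\phi)$, and denoting by $p_\text{data}(y|x)$ the empirical conditional label distribution at~$x$, the goal reduces to computing the minimizers of two simple convex functions of the score vectors $(s_y)_{y\in\sY}$ and $(t_y)_{y\in\sY}$, then comparing them.

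First, I would group the softmax loss in Eq.~\ref{eq:softmax-loss} by $x$, yielding (up to an $x$-dependent multiplicative constant) the pointwise objective $-\sum_y p_\text{data}(y|x)\,s_y + \log\sum_{y'} e^{s_{y'}}$. Setting the gradient in~$s_y$ to zero gives the familiar fixed point $e^{s_y}/\sum_{y'} e^{s_{y'}} = p_\text{data}(y|x)$, hence $\xi_y(x,\theta^*) = \log p_\text{data}(y|x) + c_1(x)$ for some $y$-independent constant $c_1(x)$.

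Next, I would take the expectation of the (stochastic) negative-sampling loss in Eq.~\ref{eq:neg-sampling-loss} over $y'\sim p_\text{n}(\cdot|x)$ and group by~$x$, obtaining the pointwise objective
\begin{equation*}
-\sum_y p_\text{data}(y|x)\,\log\sigma(t_y) - \sum_{y'} p_\text{n}(y'|x)\,\log\sigma(-t_{y'}).
\end{equation*}
Using the identities $\tfrac{d}{dt}\log\sigma(t) = 1-\sigma(t)$ and $\tfrac{d}{dt}\log\sigma(-t) = -\sigma(t)$, the stationarity condition in $t_y$ becomes $p_\text{data}(y|x)(1-\sigma(t_y)) = p_\text{n}(y|x)\,\sigma(t_y)$, which rearranges to $\sigma(t_y) = p_\text{data}(y|x)/(p_\text{data}(y|x)+p_\text{n}(y|x))$, i.e.
\begin{equation*}
\xi_y(x,\phi^*) = t_y = \log p_\text{data}(y|x) - \log p_\text{n}(y|x).
\end{equation*}
Subtracting the two expressions for the optimal scores immediately yields Eq.~\ref{eq:bias-removal} with $\text{const.} = c_1(x)$, which depends on $x$ but not on~$y$.

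The main technical obstacle is making the nonparametric limit rigorous: I must justify that the scores at different $(x,y)$ pairs can indeed be varied independently (no sharing of parameters induced by the function class) and that the two per-$x$ minimization problems I solve are strictly convex with a unique finite minimizer. The softmax case is standard, but in the negative-sampling case the convexity argument needs the assumption that $p_\text{data}(y|x)$ and $p_\text{n}(y|x)$ have a common support, or at least that $p_\text{n}(y|x)>0$ whenever $p_\text{data}(y|x)>0$; otherwise the stated optimum involves $\log 0$ and only holds on the support of $p_\text{n}(\cdot|x)$. I would either state this as a mild regularity assumption on $p_\text{n}$ or restrict the claim to labels in the common support. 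The rest is routine calculus.
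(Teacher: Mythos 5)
Your proof is correct and follows essentially the same route as the paper: characterize the nonparametric-limit optima of both losses in terms of $p_\sD(y|x)$ and $p_\text{n}(y|x)$ and subtract. Where the paper invokes the standard perfect-fit facts for the softmax MLE and the binomial discriminator, you derive the same identities by explicit pointwise stationarity; your closing remark that one needs $p_\text{n}(y|x)>0$ on the support of $p_\text{data}(y|x)$ is a legitimate regularity condition the paper leaves implicit.
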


\begin{proof}
Minimizing $\ell_\text{softmax}(\theta)$ fits the maximum likelihood estimate of a model with likelihood $p_\theta(y|x)=e^{\xi_y(x,\theta)}/Z_\theta(x)$ with normalization $Z_\theta(x)=\sum_{y'\in\sY} e^{\xi_{y'}(x,\theta)}$.
In the nonparametric limit, the score functions $\xi_y(x,\theta)$ are arbitrarily flexible, allowing for a perfect fit, thus
\begin{align} \label{eq:softmax-nonparametric}
  p_\sD(y|x) = p_{\theta^*}(y|x) = e^{\xi_y(x,\theta^*)}/Z_{\theta^*}(x)
  \qquad\qquad
  \text{(nonparametric limit).}
\end{align}
Similarly, $\ell_\text{neg.sampl.}(\phi)$ is the maximum likelihood objective of a binomial model that discriminates positive from negative samples.
The nonparametric limit admits again a perfect fit so that the learned ratio of positive rate $\sigma(\xi_y(x,\phi))$ to negative rate $\sigma(-\xi_y(x,\phi))$ equals the empirical ratio,
\begin{align} \label{eq:neg-sampling-nonparametric}
  \frac{p_{\sD}(y|x)}{p_\text{n}(y|x)}
  = \frac{\sigma(\xi_y(x,\phi^*))}{\sigma(-\xi_y(x,\phi^*))}
  = e^{\xi_y(x,\phi^*)}
  \qquad\qquad
  \text{(nonparametric limit)}
\end{align}
where we used the identity $\sigma(z)/\sigma(-z) = e^z$.
Inserting Eq.~\ref{eq:softmax-nonparametric} for $p_\sD(y|x)$ and taking the logarithm leads to Eq.~\ref{eq:bias-removal}.
Here, the ``const.'' term works out to $\log Z_{\theta^*}(x)$, which is indeed independent of~$y$.
\end{proof}

\paragraph{Signal-to-Noise Ratio.}
In preparation for Theorem~\ref{thm:optimum} below, we define a quantitative measure for the signal-to-noise ratio (SNR) in stochastic gradient descent (SGD).
In the vicinity of the minimum~$\phi^*$ of a loss function $\ell(\phi)$, the gradient $g\approx H_\ell(\phi-\phi^*)$ is approximately proportional to the Hessian $H_\ell$ of~$\ell$ at~$\phi^*$.
SGD estimates~$g$ via stochastic gradient estimates~$\hat g$, whose noise is measured by the covariance matrix $\text{Cov}[\hat g,\hat g]$.
Thus, the eigenvalues $\{\eta_i\}$ of the matrix $A:=H_\ell\,{\text{Cov}[\hat g,\hat g]}^{-1}$ measure the SNR along different directions in parameter space.
We define an overall scalar SNR~$\bar\eta$ as
\begin{align} \label{eq:def-snr}
  \bar\eta
  &:= \frac{1}{\sum_{i} {1}/{\eta_i}}
  = \frac{1}{\text{Tr}\big[A^{-1}\big]}
  = \frac{1}{\text{Tr}\big[\text{Cov}[\hat g,\hat g] \, H_\ell^{-1}\big]}.
\end{align}
Here, we sum over the inverses $1/\eta_i$ rather than~$\eta_i$ so that $\bar\eta \le \min_i\eta_i$ and thus maximizing~$\bar\eta$ encourages large values for all $\eta_i$.
The definition in Eq.~\ref{eq:def-snr} has the useful property that~$\bar\eta$ is invariant under arbitrary invertible reparameterization of~$\phi$.
Expressing~$\phi$ in terms of new model parameters~$\phi'$ maps $H_\ell$ to $J^\top\! H_\ell J$ and $\text{Cov}[\hat g,\hat g]$ to $J^\top \text{Cov}[\hat g,\hat g] J$, where $J := \partial\phi/\partial\phi'$ is the Jacobian.
Inserting into Eq.~\ref{eq:def-snr} and using the cyclic property of the trace, $\text{Tr}[PQ]=\text{Tr}[QP]$, all Jacobians cancel.

\begin{theorem} \label{thm:optimum}
  For negative sampling (Eq.~\ref{eq:neg-sampling-loss}) in the nonparametric limit, the signal-to-noise ratio~$\bar\eta$ defined in Eq.~\ref{eq:def-snr} is maximal if $p_\text{n} = p_\sD$, i.e., for adversarial negative samples.
\end{theorem}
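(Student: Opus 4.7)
The plan is to compute $H_\ell$ and $\text{Cov}[\hat g,\hat g]$ at the optimum $\phi^*$ in the nonparametric limit, reduce $\text{Tr}[\text{Cov}[\hat g,\hat g]\,H_\ell^{-1}]$ to a scalar functional of $p_\text{n}$, and maximize that functional over the simplex constraint $\sum_y p_\text{n}(y|x)=1$ for each~$x$. Treating each score $\phi_{xy} := \xi_y(x,\phi)$ as an independent scalar parameter (permissible since the SNR is reparameterization-invariant), Theorem~\ref{thm:equivalence} gives $\sigma(\phi^*_{xy}) = p_\sD(y|x)/(p_\sD(y|x)+p_\text{n}(y|x))$ and $\sigma(-\phi^*_{xy}) = p_\text{n}(y|x)/(p_\sD(y|x)+p_\text{n}(y|x))$. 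The stochastic gradient from one triple $(x_0,y_0,y_0')$ with $(x_0,y_0)\sim p_\sD$ and $y_0'\sim p_\text{n}(\cdot|x_0)$ is $\hat g_{(x,y)} = -\sigma(-\phi^*_{xy})\,\mathbf{1}[(x_0,y_0)=(x,y)] + \sigma(\phi^*_{xy})\,\mathbf{1}[(x_0,y_0')=(x,y)]$, which vanishes unless $x_0=x$; hence both $H_\ell$ and $\text{Cov}[\hat g,\hat g]$ decompose into $x$-indexed diagonal blocks. A direct expectation gives $H_x = p_\sD(x)\,\text{diag}(c_x)$ and $\text{Cov}_x = p_\sD(x)\,\bigl[\text{diag}(c_x) - 2\,c_x c_x^\top\bigr]$, where $c_{xy} := p_\sD(y|x)\,p_\text{n}(y|x)/(p_\sD(y|x)+p_\text{n}(y|x))$ and $c_x := (c_{xy})_{y\in\sY}$.

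Next, the factors $p_\sD(x)$ cancel in $\text{Cov}_x H_x^{-1}$, and since $c_x^\top \text{diag}(c_x)^{-1} = \mathbf{1}^\top$, algebra gives $\text{Cov}_x H_x^{-1} = I - 2\,c_x \mathbf{1}^\top$, so $\text{Tr}[\text{Cov}_x H_x^{-1}] = |\sY| - 2\sum_y c_{xy}$. Summing over $x$ with $p_\sD(x) > 0$ yields $\text{Tr}[\text{Cov}[\hat g,\hat g]\,H_\ell^{-1}] = (\text{term independent of } p_\text{n}) - 2\sum_{x,y} c_{xy}$, so maximizing $\bar\eta$ is equivalent to maximizing $\sum_y c_{xy}$ pointwise in $x$ subject to $\sum_y p_\text{n}(y|x) = 1$. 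Each $c_{xy}$ is (half) the harmonic mean of $p_\sD(y|x)$ and $p_\text{n}(y|x)$ and is thus concave in $p_\text{n}(y|x)$, making the per-$x$ problem a concave program; the Lagrange stationarity condition $\partial c_{xy}/\partial p_\text{n}(y|x) = [p_\sD(y|x)/(p_\sD(y|x)+p_\text{n}(y|x))]^2 = \mu(x)$ combined with $\sum_y p_\text{n}(y|x) = \sum_y p_\sD(y|x) = 1$ forces the unique global maximizer $p_\text{n}(y|x) = p_\sD(y|x)$.

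The main subtlety I expect to require care is the off-diagonal structure of $\text{Cov}_x$: the positive sample $(x_0,y_0)$ and the negative sample $(x_0,y_0')$ in a single triple share the same $x_0$, producing the cross-term $\text{Cov}[\hat g_{(x,y)},\hat g_{(x,y')}] = -2\,p_\sD(x)\,c_{xy}c_{xy'}$ for $y\ne y'$. Dropping this term would make $\text{Cov}_x = H_x$ by the usual Fisher-information identity applied separately to the two Bernoulli sub-losses, rendering $\bar\eta$ independent of $p_\text{n}$ and trivializing the theorem; the entire $p_\text{n}$-dependence of the SNR lives in this negative cross-correlation, whose magnitude is maximized exactly when $p_\text{n}=p_\sD$.
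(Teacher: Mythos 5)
Your proposal is correct and follows essentially the same route as the paper's proof: reparameterize by the raw scores $\xi_{x,y}$, compute the diagonal Hessian and the block-diagonal covariance with a rank-one correction, and reduce $1/\bar\eta$ to minimizing $-\sum_y c_{xy}$ per $x$. Your $c_{xy} = p_\sD(y|x)\,p_\text{n}(y|x)/(p_\sD(y|x)+p_\text{n}(y|x))$ is exactly the paper's $\alpha_{x,y} = p_\text{n}(y|x)\,\sigma(\xi_y(x,\phi^*))$ once Eq.~\ref{eq:neg-sampling-nonparametric} is substituted, and your $\mathrm{Cov}_x H_x^{-1} = I - 2 c_x\mathbf 1^\top$ matches the paper's Eq.~\ref{eq:cov-matrix} (up to the immaterial $N$ vs.\ $p_\sD(x)$ normalization). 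The one place you diverge is the final optimization: the paper writes $\sum_y \alpha_{x,y} = \E_{p_\text{n}}[f(p_\sD/p_\text{n})]$ with $f(z)=1/(1+1/z)$ and invokes Jensen's inequality for the concave $f$ to get the bound $f(1)=\tfrac12$, achieved iff $p_\sD/p_\text{n}$ is constant; you instead observe that $c_{xy}$ is half the harmonic mean and hence concave in $p_\text{n}(y|x)$, and solve the constrained concave program directly via stationarity $\bigl(p_\sD/(p_\sD+p_\text{n})\bigr)^2 = \mu(x)$. Both closings are correct and equally short; Jensen gives the tight numeric bound $\sum_y\alpha_{x,y}\le\tfrac12$ as a byproduct, while your KKT argument makes the uniqueness of the maximizer slightly more explicit. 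Your closing remark that the entire $p_\text{n}$-dependence of $\bar\eta$ sits in the $-2\,c_x c_x^\top$ cross-term (without which the Bernoulli Fisher identity would make $\mathrm{Cov}_x = H_x$) is a correct and useful observation that the paper does not state.
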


\begin{proof}
In the nonparametric limit, the scores $\xi_y(x,\phi)$ can be regarded as independent variables for all $x$ and~$y$.
We therefore treat the scores directly as model parameters, using the invariance of~$\bar\eta$ under reparameterization.
Using only Eq.~\ref{eq:neg-sampling-loss}, Eq.~\ref{eq:neg-sampling-nonparametric}, and properties of the $\sigma$-function, we show in Appendix~\ref{sec:appendix-proof} that the Hessian of the loss function is diagonal in this coordinate system, and given by
\begin{align} \label{eq:def-alpha}
  H_\ell = \text{diag}(\alpha_{x,y})
  \qquad\text{with}\qquad
  \alpha_{x,y} = p_\text{n}(y|x) \, \sigma(\xi_y(x,\phi^*))
\end{align}
and that the noise covariance matrix is block diagonal,
\begin{align} \label{eq:cov-matrix}
  \text{Cov}[\hat g,\hat g] = \text{diag}(C_x)
  \qquad\text{with blocks}\qquad
  C_x = N\,\text{diag}(\alpha_{x,:}) - 2 N\alpha_{x,:} \alpha_{x,:}^\top
\end{align}
where~$\alpha_{x,:}\equiv(\alpha_{x,y})_{y\in\sY}$ denotes a $|\sY|$-dimensional column vector.
Thus, the trace in Eq.~\ref{eq:def-snr} is
\begin{align}
  \frac{1}{\bar\eta}
  = \sum_x \text{Tr}\!\left[ C_x \,\text{diag}\!\left(\!\frac{1}{\alpha_{x,:}} \!\right) \!\right]
  = N\sum_x \text{Tr}\!\left[ I\! - 2 \alpha_{x,:} \alpha_{x,:}^\top \,\text{diag}\!\left(\!\frac{1}{\alpha_{x,:}}\!\right) \!\right]
  = N\sum_x \!\bigg[|\sY|\! - 2 \sum_{y\in\sY}\!\alpha_{x,y} \bigg].
\end{align}
We thus have to maximize $\sum_{y\in\sY} \alpha_{x,y}$ for each~$x$ in the training set.
We find from Eq.~\ref{eq:def-alpha} and Eq.~\ref{eq:neg-sampling-nonparametric},
\begin{align}
    \sum_{y\in\sY} \alpha_{x,y\!}
    \stackrel{\!(\ref{eq:def-alpha})\!}{\!=\!} \E_{p_\text{n}(y|x)} \big[ \sigma(\xi_y(x,\phi^*)) \big]
    = \E_{p_\text{n}(y|x)} \!\left[ \frac{1}{1\!+\!e^{-\xi_y(x,\phi^*)}} \right]
    \stackrel{\!(\ref{eq:neg-sampling-nonparametric})}{\!=} \E_{p_\text{n}(y|x)} \!\left[f\!\left(\!\frac{p_\sD(y|x)}{p_\text{n}{(y|x)}}\!\right) \right]
    \label{eq:alpha-via-f}
\end{align}
with $f(z) := 1/(1+1/z)$.
Using Jensen's inequality for the concave function~$f$, we find that the right-hand side of Eq.~\ref{eq:alpha-via-f} has the upper bound $f\big(\E_{p_\text{n}(y|x)}[p_\sD(y|x)/p_\text{n}(y|x)]\big) = f(1) = \frac12$, which it reaches precisely if the argument of~$f$ in Eq.~\ref{eq:alpha-via-f} is a constant, i.e., iff $p_\text{n}(y|x) = p_\sD(y|x)\;\forall y\in\sY$.
\end{proof}


\section{Results}
\label{sec:results}

We evaluated the proposed adversarial negative sampling method on two established benchmarks by comparing speed of convergence and predictive performance against five different baselines.

\begin{table}
  \caption{Sizes of data sets and hyperparameters.
  $N$ = number of training points;
  $C$ = number of categories (after preprocessing);
  $\rho$ = learning rate;
  $\lambda$ = regularizer;
  $\sigma_0^2$ = prior variance.
  }
  \label{table:datasets}
  \centering
  \begin{tabularx}{\textwidth}{@{}l|r@{$=$}l|r@{$=$}l@{}r@{$=$}l@{$\;$}r@{$=$}l@{$\;$}r@{$=$}l@{$\;$}r@{$=$}l@{$\;$}r@{$=$}l}
  \toprule
    & \multicolumn{2}{c|}{Size of}
    & \multicolumn{2}{c}{adv.~neg.~s.}
    & \multicolumn{2}{c}{$\!\!\!$uniform}
    & \multicolumn{2}{c}{$\propto$~freq.}
    & \multicolumn{2}{c}{}
    & \multicolumn{2}{c}{}
    & \multicolumn{2}{c}{}
\\
    Data set
    & \multicolumn{2}{c|}{data set}
    & \multicolumn{2}{c}{(proposed)}
    & \multicolumn{2}{c}{$\!\!\!$neg.~s.}
    & \multicolumn{2}{c}{neg.~s.}
    & \multicolumn{2}{c}{NCE}
    & \multicolumn{2}{c}{A\&R}
    & \multicolumn{2}{c}{OVE}
\\
  \midrule
   \multirow{2}{6.7em}{Wikipedia-500K}
    & $N$ & $1{,}646{,}302$
    & $\rho$ & $0.01$
    & $\!\!\!\rho$ & $0.001$
    & $\rho$ & $0.003$
    & $\rho$ & $0.01$
    & $\rho$ & $0.03$
    & $\rho$ & $0.02$
\\
    & $C$ & $217{,}240$
    & $\lambda$ & $0.001$
    & $\!\!\!\lambda$ & $0.0001$
    & $\lambda$ & $10^{-5}$
    & $\lambda$ & $0.003$
    & $\sigma_0^2$ & $0.1$
    & $\sigma_0^2$ & $1$
\\\midrule
    \multirow{2}{6.7em}{Amazon-670K}
    & $N$ & $490{,}449$
    & $\rho$ & $0.01$
    & $\!\!\!\rho$ & $0.01$
    & $\rho$ & $0.003$
    & $\rho$ & $0.01$
    & $\rho$ & $0.1$
    & $\rho$ & $0.03$
  \\
    & $C$ & $213{,}874$
    & $\lambda$ & $0.001$
    & $\!\!\!\lambda$ & $0.0003$
    & $\lambda$ & $10^{-5}$
    & $\lambda$ & $0.001$
    & $\sigma_0^2$ & $10$
    & $\sigma_0^2$ & $10$
  \\
  \bottomrule
  \end{tabularx}
\end{table}

\paragraph{Datasets, Preprocessing and Model.}
We used the Wikipedia-500K and Amazon-670K data sets from the Extreme Classification Repository~\citep{extremeclassrepo} with $K=512$-dimensional XML-CNN features~\citep{liu2017deep} downloaded from~\citep{xmlcnnrepo}.
As oth data sets contain multiple labels per data point
we follow the approach in~\citep{ruiz2018augment} and keep only the first label for each data point.
Table~\ref{table:datasets} shows the resulting sizes.
We fit a liner model with scores $\xi_y(x,\phi) = x^\top w_y + b_y$, where the model parameters $\phi$ are the weight vectors $w_y\in\IR^K$ and biases $b_y\in\IR$ for each label~$y$.

\paragraph{Baselines.}
We compare our proposed method to five baselines:
(i)~standard negative sampling with a uniform noise distribution;
(ii)~negative sampling with an unconditional noise distribution $p_\text{n}(y')$ set to the empirical label frequencies;
(iii)~noise contrastive estimation~(NCE, see below);
(iv)~`Augment and Reduce'~\citep{ruiz2018augment}; and
(v)~`One vs.~Each'~\citep{titsias2016one}.
We do not compare to full softmax classification, which would be unfeasible on the large data sets (see Table~\ref{table:datasets}; a single epoch of optimizing the full softmax loss would scale as $O(NCK)$).
However, we provide additional results that compare softmax against negative sampling on a smaller data set in Appendix~\ref{sec:appendix-experiments}.

NCE~\citep{gutmann2010noise} is sometimes used as a synonym for negative sampling in the literature, but the original proposal of NCE is more general and allows for a nonuniform base distribution.
We use our trained auxiliary model (Section~\ref{sec:aux_model}) for the base distribution of NCE.
Compared to our proposed method, NCE uses the base distribution only during training and not for predictions.
Therefore, NCE has to re-learn everything that is already captured by the base distribution.
This is less of an issue in the original setup for which NCE was proposed, namely unsupervised density estimation over a continuous space.
By contrast, training a supervised classifier effectively means training a separate model for each label~$y\in\sY$, which is expensive if~$\sY$ is large.
Thus, having to re-learn what the base distribution already captures is potentially wasteful.

\paragraph{Hyperparameters.}
We tuned the hyperparameters for each method individually using the validation set.
Table~\ref{table:datasets} shows the resulting hyperparameters.
For the proposed method and baselines (i)-(iii) we used an Adagrad optimizer~\citep{duchi2011adaptive} and considered learning rates $\rho\in\{0.0003, 0.001, 0.003, 0.01, 0.03\}$ and regularizer strengths (see Eq.~\ref{eq:neg-sampling-loss-regularized}) $\lambda\in\{10^{-5}, 3\times10^{-5}, \ldots, 0.03\}$.
For `Augment and Reduce' and `One vs.~Each' we used the implementation published by the authors~\citep{ar_implementation}, and tuned the learning rate~$\rho$ and prior variance~$\sigma_0^2$.
For the auxiliary model, we used a feature dimension of $k=16$ and regularizer strength~$\lambda_\text{n}=0.1$ for both data sets.

\begin{figure}
  \centering
  \includegraphics[width=\textwidth]{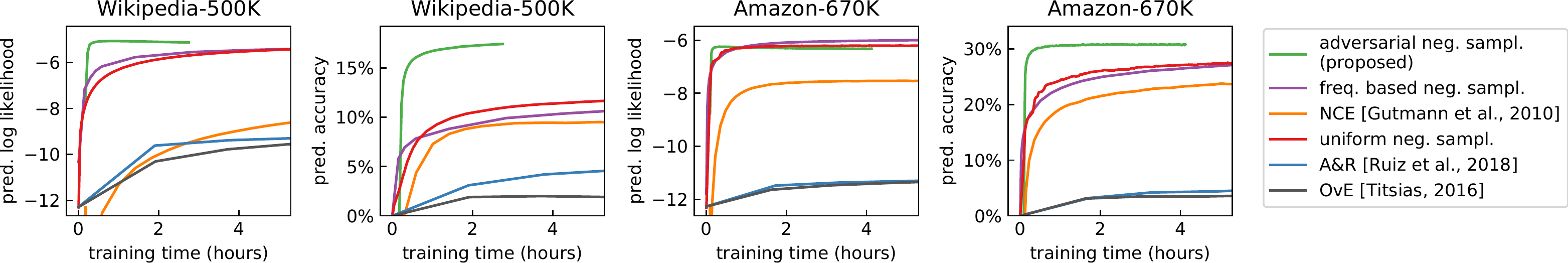}
  \caption{Learning curves for our proposed adversarial negative sampling method (green) and for five different baselines on two large data sets (see Table~\ref{table:datasets}).} 
  \label{fig:results}
\end{figure}

\paragraph{Results.}
Figure~\ref{fig:results} shows our results on the Wikipedia-500K data set (left two plots) and the Amazon-670K data set (right two plots).
For each data set, we plot the the predictive log likelihood per test data point (first and third plot) and the predictive accuracy
(second and fourth plot).
The green curve in each plot shows our proposed adversarial negative sampling methods.
Both our method and NCE (orange) start slightly shifted to the right to account for the time to fit the auxiliary model.

Our main observation is that the proposed method converges orders of magnitude faster and reaches better accuracies (second and third plot in Figure~\ref{fig:results}) than all baselines.
On the (smaller) Amazon-670K data set, standard uniform and frequency based negative sampling reach a slightly higher predictive log likelihood, but our method performs considerably better in terms of predictive accuracy on both data sets.
This may be understood as the predictive accuracy is very sensitive to the precise scores of the highest ranked labels, as a small change in these scores can affect which label is ranked highest.
With adversarial negative sampling, the training procedure focuses on getting the scores of the highest ranked labels right, thus improving in particular the predictive accuracy.


\section{Related Work}
\label{sec:related}

\paragraph{Efficient Evaluation of the Softmax Loss Function.}
Methods to speed up evaluation of Eq.~\ref{eq:softmax-loss} include augmenting the model by adding auxiliary latent variables that can be marginalized over analytically~\citep{galy2019multi,wenzel2019efficient,ruiz2018augment,titsias2016one}.
More closely related to our work are methods based on negative sampling~\citep{mnih2009scalable,mikolov2013distributed} and noise contrastive estimation~\citep{gutmann2010noise}.
Generalizations of negative sampling to non-uniform noise distributions have been proposed, e.g., in~\citep{zhang2018gneg,chen2018improving,wang2014knowledge,gutmann2010noise}.
Our method differs from these proposals by drawing the negative samples from a conditional distribution that takes the input feature into account, and by requiring the model to learn only correlations that are not already captured by the noise distribution.
We further derive the optimal distribution for negative samples, and we propose an efficient way to approximate it via an auxiliary model.
Adversarial training~\citep{miyato2016adversarial} is a popular method for training deep generative models~\citep{tu2007learning,goodfellow2014generative}.
By contrast, our method trains a discriminative model over a discrete set of labels (see also our comparison to GANs at the end of Section~\ref{sec:generalized_neg_sampling}).

A different sampling-based approximation of softmax classification is `sampled softmax'~\citep{bengio2003quick}.
It directly approximates the sum over classes~$y'$ in the loss (Eq.~\ref{eq:softmax-loss}) by sampling, which is biased even for a uniform sampling distribution.
A nonuniform sampling distribution can remove or reduce the bias~\citep{bengio2008adaptive,blanc2018adaptive,rawat2019sampled}.
By contrast, our method uses negative sampling, and it uses a nonuniform distribution to reduce the gradient variance.

\paragraph{Decision Trees.}
Decision trees~\citep{somvanshi2016review} are popular in the extreme classification literature \citep{agrawal2013multi,jain2016extreme,prabhu2014fastxml,siblini2018craftml,weston2013label,bhatia2015sparse,jasinska2016extreme}.
Our proposed method employs a probabilistic decision tree that is similar to Hierarchical Softmax~\citep{morin2005hierarchical,mikolov2013distributed}.
While decision trees allow for efficient training and sampling in $O(\log C)$ time, their hierarchical architecture imposes a structural bias.
Our proposed method trains a more expressive model without such a structural bias on top of the decision tree to correct for any structural bias.


\section{Conclusions}
\label{sec:conclusions}

We proposed a simple method to train a classifier over a large set of labels.
Our method is based on a scalable approximation to the softmax loss function via a generalized form of negative sampling.
By generating adversarial negative samples from an auxiliary model, we proved that we maximize the signal-to-noise ratio of the stochastic gradient estimate.
We further show that, while the auxiliary model introduces a bias, we can remove the bias at test time.
We believe that due to its simplicity, our method can be widely used, and we publish the code%
\footnote{\url{https://github.com/mandt-lab/adversarial-negative-sampling}}
of both the main and the auxiliary model.


\section*{Acknowledgements}

Stephan Mandt
acknowledges funding from DARPA (HR001119S0038), NSF (FW-HTF-RM), and Qualcomm.

\bibliography{references}
\bibliographystyle{iclr2020_conference}

\newpage
\appendix
\setcounter{page}{1}
\setcounter{equation}{0}
\renewcommand{\thesubsection}{A.\arabic{subsection}}
\renewcommand{\thepage}{A\arabic{page}}
\renewcommand{\theequation}{A\arabic{equation}}
\renewcommand{\thetable}{A\arabic{table}}
\renewcommand{\thefigure}{A\arabic{figure}}


\section*{Appendix}

\subsection{Details of the Proof of Theorem~\ref{thm:optimum}}
\label{sec:appendix-proof}

In the nonparametric limit, the score functions $\xi_y(x,\phi)$ are so flexible that they can take arbitrary values for all~$x$ in the data set and all $y\in\sY$.
Taking advantage of the invariance of~$\bar\eta$ under reparameterization, we parameterize the model directly by its scores.
We use the shorthand $\xi_{x,y}\equiv \xi_y(x,\phi)$, and we denote the collection of all scores over all $x$ and~$y\in\sY$ by boldface~$\boldsymbol\xi \equiv (\xi_{x,y})_{x,y}$.

\paragraph{Hessian.}
Eq.~\ref{eq:neg-sampling-loss} defines the loss~$\ell_\text{neg.sampl.}$ as a stochastic function.
SGD minimizes its expectation,
\begin{align} \label{eq-appendix:ell}
  \ell(\boldsymbol\xi)
  &:= \E\big[\ell_\text{neg.sampl.}(\boldsymbol\xi)\big]
  = \sum_x \sum_{y\in\sY}\big[
      - p_\sD(y|x) \log\sigma(\xi_{x,y})
      - p_\text{n}(y|x) \log\sigma(-\xi_{x,y})
    \big]
\end{align}
where the sum over~$x$ runs over all feature vectors in the training set.
We obtain the gradient
\begin{align} \label{eq-appendix:gradient}
  g_{x,y} &\equiv \nabla_{\!\xi_{x,y}} \ell(\boldsymbol\xi)
  = - p_\sD(y|x) \,\sigma(-\xi_{x,y})
    + p_\text{n}(y|x) \,\sigma(\xi_{x,y})
\end{align}
where we used the relation $\nabla_{\!z}\log\sigma(z) = \sigma(-z)$.
The gradient is a vector whose components span all combinations of $x$ and~$y$.
The Hessian matrix~$H_\ell$ contains the derivatives of each gradient component~$g_{x,y}$ by each coordinate~$\xi_{\tilde x,\tilde y}$.
Since~$g_{x,y}$ in Eq.~\ref{eq-appendix:gradient} depends only on the single coordinate~$\xi_{x,y}$, only the diagonal parts of the Hessian are nonzero, i.e., the components with $x=\tilde x$ and $y=\tilde y$.
Thus,
\begin{align} \label{eq-appendix:hessian}
  H_\ell = \text{diag}(\alpha_{x,y})
  \qquad\text{with}\qquad
  \alpha_{x,y} = \nabla_{\!\xi_{x,y}}\, g_{x,y}.
\end{align}
Using the identity $\nabla_{\!z} \sigma(z) = \sigma(z)\, \sigma(-z)$, we find
\begin{align} \label{eq-appendix:alpha1}
  \alpha_{x,y}
  &= \big[ p_\sD(y|x) + p_\text{n}(y|x) \big]\,
  \sigma(-\xi_{x,y}) \,\sigma(\xi_{x,y}).
\end{align}

Since we evaluate the Hessian in the nonparametric limit at the minimum of the loss, the scores~$\xi_{x,y}$ satisfy Eq.~\ref{eq:neg-sampling-nonparametric}, i.e.,
\begin{align} \label{eq-appendix:neg-sampling-nonparametric}
  p_\sD(y|x) \, \sigma(-\xi_{x,y})  = p_\text{n}(y|x)\, \sigma(\xi_{x,y}).
\end{align}
This allows us to simplify Eq.~\ref{eq-appendix:alpha1} by eliminating~$p_\sD$,
\begin{align} \label{eq-appendix:alpha}
  \alpha_{x,y}
  &= p_\text{n}(y|x) \underbrace{\big[ \sigma(\xi_{x,y}) + \sigma(-\xi_{x,y}) \big]}_{=1} \sigma(\xi_{x,y})
  = p_\text{n}(y|x) \, \sigma(\xi_{x,y}).
\end{align}
Eqs.~\ref{eq-appendix:hessian} and~\ref{eq-appendix:alpha} together prove Eq.~\ref{eq:def-alpha} of the main text.

\paragraph{Noise Covariance Matrix.}
SGD uses estimates $\hat\ell$ of the loss function in Eq.~\ref{eq-appendix:ell}, obtained by drawing a positive sample $(x,y)\sim\sD$ and a label for the negative sample~$y'\sim p_\text{n}(y'|x)$, thus
\begin{align} \label{eq-appendix:loss-estimate}
  \hat\ell(\boldsymbol\xi) = -N \big[ \log\sigma(\xi_{x,y}) + \log\sigma(-\xi_{x,y'}) \big]
\end{align}
where the factor of $N\equiv|\sD|$ is because the sum over~$x$ in Eq.~\ref{eq-appendix:ell} scales proportionally to the size of the data set~$\sD$ (in practice one typically normalizes the loss function by~$N$ without affecting the signal to noise ratio).
One uses~$\hat\ell$ to obtain unbiased gradient estimates~$\hat g$.
We introduce new symbols $\tilde x$ and~$\tilde y$ for the components~$\hat g_{\tilde x,\tilde y}$ of the gradient estimate to avoid confusion with the $x$ and~$y$ drawn from the data set and the~$y'$ drawn from the noise distribution in Eq.~\ref{eq-appendix:loss-estimate} above.
Since the scores are independent variables in the nonparametric limit, the derivative $\nabla_{\!\xi_{\tilde x,\tilde y}}\,\xi_{x,y}$ is one if $\tilde x=x$ and $\tilde y=y$, and zero otherwise.
We denote this by indicator functions $\boldsymbol 1_{\tilde x=x}$ and $\boldsymbol 1_{\tilde y=y}$.
Thus, we obtain
\begin{align} \label{eq:grad-estimator}
  \hat g_{\tilde x, \tilde y}
  \equiv \nabla_{\!\xi_{\tilde x,\tilde y}}\hat\ell(\boldsymbol\xi)
  = - N \big[ \sigma(-\xi_{x,y}) \boldsymbol 1_{\tilde y=y} - \sigma(\xi_{x,y'}) \boldsymbol 1_{\tilde y=y'} \big]
    \boldsymbol 1_{\tilde x=x}
\end{align}

We evaluate the covariance matrix of~$\hat g$ at the minimum of the loss function.
Here, $\E[\hat g]\equiv g=0$, and thus $\text{Cov}[\hat g,\hat g] \equiv \E[\hat g\, \hat g^\top] - \E[\hat g] \,\E[\hat g^\top]$ simplifies to $\E[\hat g\, \hat g^\top]$.
Introducing yet another pair of indices $\tilde{\tilde x}$ and~$\tilde{\tilde y}$ to distinguish the two factors of~$\hat g$, we denote the components of the covariance matrix as
\begin{align} \label{eq-appendix:cov-notation}
  \text{Cov}[\hat g_{\tilde x,\tilde y}, \hat g_{\tilde{\tilde x},\tilde{\tilde y}}]
  &\equiv \E_{(x,y)\sim \sD,\, y'\sim p_\text{n}}
    \big[\hat g_{\tilde x,\tilde y}\, \hat g_{\tilde{\tilde x},\tilde{\tilde y}} \big].
\end{align}
Here, the expectation is over $p_\sD(x,y)\, p_\text{n}(y'|x) = p_\sD(x) \, p_\sD(y|x) \,p_\text{n}(y'|x)$.
We start with the evaluation of the expectation over~$x$, using $\E_{x\sim p_\sD}[\,\cdot\,] = \frac1N\sum_x[\,\cdot\,]$ where the sum runs over all~$x$ in the data set.
If $x\neq\tilde x$ or $x\neq\tilde{\tilde x}$, then either one of the two gradient estimates~$\hat g$ in the expectation on the right-hand side of Eq.~\ref{eq-appendix:cov-notation} vanishes.
Therefore, only terms with $x=\tilde x=\tilde{\tilde x}$ contribute, and the covariance matrix is block diagonal in~$x$ as claimed in Eq.~\ref{eq:cov-matrix} of the main text.
The blocks~$C_x$ of the block diagonal matrix have entries
\begin{align} \label{eq-appendix:block-entries}
  (C_x)_{\tilde y,\tilde{\tilde y}}
  &\equiv \text{Cov}[\hat g_{x,\tilde y}, \hat g_{x,\tilde{\tilde y}}]
  = \frac{1}{N}\,
    \E_{p_\sD(y|x)\, p_\text{n}(y'|x)} \big[
      \hat g_{x,\tilde y}\, \hat g_{x,\tilde{\tilde y}}
    \big].
\end{align}
where we find for the product $\hat g_{x,\tilde y}\, \hat g_{x,\tilde{\tilde y}}$ by inserting Eq.~\ref{eq:grad-estimator} and multiplying out the terms,
\begin{equation}
\begin{aligned}
  \hat g_{x,\tilde y}\, \hat g_{x,\tilde{\tilde y}}
  =N^2 \Big[
      &\Big(\sigma(-\xi_{x,\tilde y})^2
        \, \boldsymbol 1_{\tilde y = y}
        + \sigma(\xi_{x,\tilde y})^2
        \, \boldsymbol 1_{\tilde y = y'}
        \Big)
        \, \boldsymbol 1_{\tilde y = \tilde{\tilde y}}
      \\
      &-\sigma(-\xi_{x,\tilde y})\,\sigma(\xi_{x,\tilde{\tilde y}})
        \, \boldsymbol 1_{\tilde y=y \,\wedge\, \tilde{\tilde y}=y'}
      -\sigma(\xi_{x,\tilde y})\,\sigma(-\xi_{x,\tilde{\tilde y}})
        \, \boldsymbol 1_{\tilde y=y' \,\wedge\, \tilde{\tilde y}=y}
    \Big]
\end{aligned}
\end{equation}
Taking the expectation in Eq.~\ref{eq-appendix:block-entries} leads to the following substitutions:
\begin{align}
  \boldsymbol 1_{\tilde y=y} \longrightarrow p_\sD(\tilde y|x); \quad
  \boldsymbol 1_{\tilde{\tilde y}=y} \longrightarrow p_\sD(\tilde{\tilde y}|x); \quad
  \boldsymbol 1_{\tilde y=y'} \longrightarrow p_\text{n}(\tilde y|x); \quad
  \boldsymbol 1_{\tilde{\tilde y}=y'} \longrightarrow p_\text{n}(\tilde{\tilde y}|x).
\end{align}
Thus, we find,
\begin{align}
  (C_x)_{\tilde y,\tilde{\tilde y}}
  =N \Big[
      &\Big( p_\sD(\tilde y|x) \,\sigma(-\xi_{x,\tilde y})^2
      +p_\text{n}(\tilde y|x) \,\sigma(\xi_{x, \tilde y})^2
      \Big) \boldsymbol 1_{\tilde y = \tilde{\tilde y}}
    \\
      &- p_\sD(\tilde y|x) \, p_\text{n}(\tilde{\tilde y}|x)\,
      \sigma(-\xi_{x,\tilde y})\,\sigma(\xi_{x,\tilde{\tilde y}})
    - p_\text{n}(\tilde y|x) \, p_\sD(\tilde{\tilde y}|x)\,
      \sigma(\xi_{x,\tilde y})\,\sigma(-\xi_{x,\tilde{\tilde y}})
    \Big]. \nonumber
\end{align}
Using Eq.~\ref{eq-appendix:neg-sampling-nonparametric}, we can again eliminate $p_\sD$,
\begin{align}
  (C_x)_{\tilde y,\tilde{\tilde y}}
  &=N \Big[
      \Big( p_\text{n}(\tilde y|x) \,\sigma(-\xi_{x,\tilde y}) \,\sigma(\xi_{x,\tilde y})
      +p_\text{n}(\tilde y|x) \,\sigma(\xi_{x, \tilde y})^2
      \Big) \boldsymbol 1_{\tilde y = \tilde{\tilde y}}
    \nonumber\\
      &\hspace{27pt} -2 p_\text{n}(\tilde y|x) \, p_\text{n}(\tilde{\tilde y}|x)\,
      \sigma(\xi_{x,\tilde y})\,\sigma(\xi_{x,\tilde{\tilde y}})
    \Big] \nonumber\\
  &= N \Big[
    \alpha_{x,\tilde y} \,\underbrace{\!\Big(\sigma(-\xi_{x,\tilde y}) +\sigma(\xi_{x, \tilde y})
    \Big)\!}_{=1}\, \boldsymbol 1_{\tilde y = \tilde{\tilde y}}
    \,-\,2 \,\alpha_{x,\tilde y} \,\alpha_{x,\tilde{\tilde y}}
    \nonumber\\
  &= N \big[
    \alpha_{x,\tilde y} \,\boldsymbol 1_{\tilde y = \tilde{\tilde y}}
    \,-\,2 \,\alpha_{x,\tilde y} \,\alpha_{x,\tilde{\tilde y}}
  \big]. \label{eq:appendix-result-cov}
\end{align}
Eq.~\ref{eq:appendix-result-cov} is the component-wise explicit form of Eq.~\ref{eq:cov-matrix} of the main text.

\subsection{Experimental Comparison Between Softmax Classification and Negative Sampling}
\label{sec:appendix-experiments}

We provide additional experimental results that evaluate the performance gap due to negative sampling compared to full softmax classification on a smaller data set.
Theorem~\ref{thm:equivalence} states an equivalence between negative sampling and softmax classification.
However, this equivalence strictly holds only (i)~in the nonparametric limit, (ii)~without regularization, and (iii)~if the optimizer really finds the global minimum of the loss function.
In practice, all three assumptions hold only approximately.

\paragraph{Data Set and Preprocessing.}
To evaluate the performance gap experimentally, we used ``EURLex-4K'' data set~\citep{extremeclassrepo,mencia2008efficient}, which is small enough to admit direct optimization of the softmax loss function.
Similar to the preprocessing of the two main data sets described in Section~\ref{sec:results} of the main text, we converted the multi-class classification problem into a single-class classification problem by selecting the label with the smallest ID for each data point, and discarding any data points without any labels.
We split off $10\%$ of the training set for validation, and report results on the provided test set.
This resulted in a training set with $N=13{,}960$ data points and $C=3{,}687$ categories.
As in the main paper, we reduced the feature dimension to $K=512$ (using PCA for simplicity here).

\paragraph{Model and Hyperparameters.}
The goal of these experiments is to evaluate the performance gap due to negative sampling in general.
We therefore fitted the same affine linear model as described in Section~\ref{sec:results} of the main text using the full softmax loss function (Eq.~\ref{eq:softmax-loss}) and the simplest form of negative sampling (Eq.~\ref{eq:neg-sampling-loss}), i.e., negative sampling with a uniform noise distribution.
We added a quadratic regularizer with strength $\lambda$ to both loss functions.

For both methods, we tested the same hyperparameter combinations as in Section~\ref{sec:results} on the validation set using early stopping.
For softmax, we extended the range of tested learning rates up to $\rho=10$ as higher learning rates turned out to perform better in this method (this can be understood due to the low gradient noise).
The optimal hyperparameters for softmax turned out to be a learning rate of $\rho=0.3$ and regularization strength $\lambda=3\times 10^{-4}$.
For negative sampling, we found $\rho=3\times 10^{-3}$ and $\lambda=3\times 10^{-4}$.

\paragraph{Results.}
We evaluated the predictive accuracy for both methods.
With the full softmax method, we obtain $33.6\%$ correct predictions on the test set, whereas the predictive accuracy drops to $26.4\%$ with negative sampling.
This suggests that, when possible, minimizing the full softmax loss function should be preferred.
However, in many cases, the softmax loss function is too expensive.

\end{document}